\def\eqref#1{equation~\ref{#1}}
\def\1{\bm{1}}
\DeclareMathAlphabet{\mathsfit}{\encodingdefault}{\sfdefault}{m}{sl}
\SetMathAlphabet{\mathsfit}{bold}{\encodingdefault}{\sfdefault}{bx}{n}
\DeclareMathOperator*{\argmax}{arg\,max}
\newtheorem{theorem}{Theorem}[section]
\newtheorem{lemma}[theorem]{Lemma}
\setlist[itemize]{noitemsep, nolistsep, leftmargin=*}
\setlist[enumerate]{noitemsep, nolistsep, leftmargin=*}
\title{Large Language Model Routing with Benchmark Datasets}
\date{}
\author{Tal Shnitzer\thanks{Equal contribution} \thanks{Broad Institute, work done while at CSAIL, MIT}, Anthony Ou\footnotemark[1] \thanks{CSAIL, MIT}, Mírian Silva\thanks{MIT-IBM Watson AI Lab}, Kate Soule\footnotemark[4], Yuekai Sun\thanks{University of Michigan}, \\ Justin Solomon\footnotemark[3], Neil Thompson\footnotemark[3], Mikhail Yurochkin\footnotemark[4]}
\begin{document}

\maketitle

\begin{abstract}
There is a rapidly growing number of open-source Large Language Models (LLMs) and benchmark datasets to compare them. 
While some models dominate these benchmarks, no single model typically achieves the best accuracy in all tasks and use cases.
In this work, we address the challenge of selecting the best LLM out of a collection of models for new tasks.
We propose a new formulation for the problem, in which benchmark datasets are repurposed to learn a ``router'' model for this LLM selection, and we show that this problem can be reduced to a collection of binary classification tasks. We demonstrate the utility and limitations of learning model routers from various benchmark datasets, where we consistently improve performance upon using any single model for all tasks.
\end{abstract}

\section{Introduction}

\begin{wrapfigure}[27]{r}{0.43\linewidth}
\vspace{-0.4in}
    \centering
    \includegraphics[width=\linewidth]
    {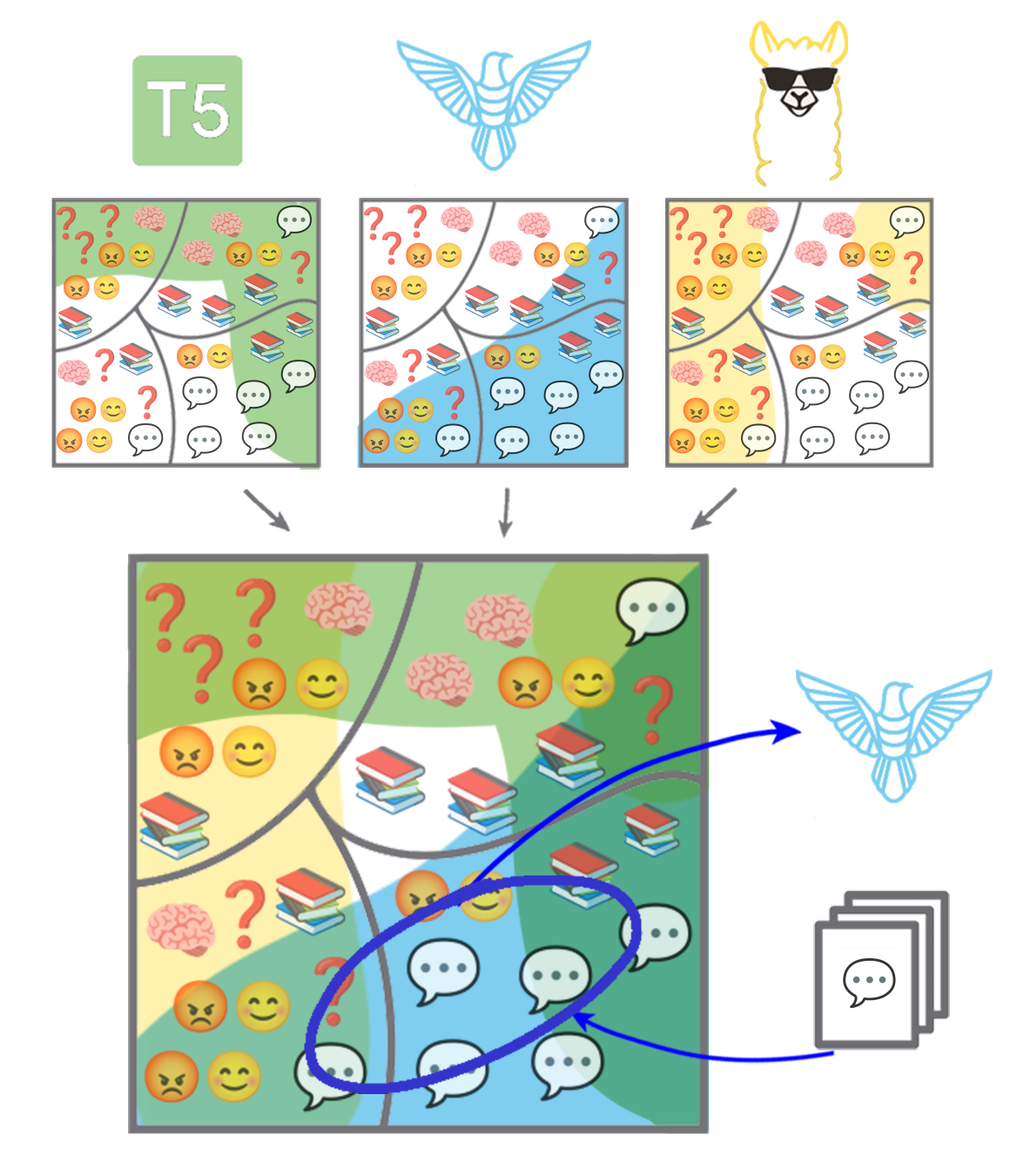}
    \vspace{-0.25in}
    \caption{We learn the strengths of candidate LLMs (marked with corresponding colors) on various tasks (emojis: QA, reasoning, summarization, etc.)  and domains (4 sections within each box: finance, legal, general knowledge, etc.) from benchmark datasets. We accomplish this by training a binary classifier per LLM (upper part of the figure). For a new task, we score each LLM with these binary classifiers and recommend an LLM for the user (lower part).}
    \label{fig:diagram}
\end{wrapfigure}

Large Language Models (LLMs) have demonstrated groundbreaking abilities to solve diverse tasks across a variety of NLP domains \citep{devlin2018bert,brown2020language}. Today, researchers in both academia and industry are releasing new LLMs \emph{daily}.\footnote{Hugging Face currently hosts 22,482 models for text generation.} These models perform tasks ranging from text classification to question-answering, summarization, and dialogue.

The popularity and influx of open-source LLMs and the diversity of their potential use cases made it crucial to develop comprehensive benchmarks, i.e., collections of datasets representing different tasks and domains to compare LLMs. For example, HELM \citep{liang2022holistic} consists of 42 scenarios covering a variety of uses, MMLU \citep{hendrycks2020measuring} is a multiple-choice question answering benchmark with 57 tasks organized by topics, Open LLM Leaderboard \citep{open-llm-leaderboard} combines MMLU with other question-answering datasets, and LM Evaluation Harness \citep{eval-harness} supports over 200 tasks. While there always will be an LLM that is the best \emph{on average} across benchmarks, there is unlikely to ever be a model that is strictly the best \emph{on each} of the hundreds of datasets comprising various benchmarks. Meanwhile, a practitioner typically wants to know what is the best model for their specific use case and is less concerned about average performance on a plethora of other datasets.

In this paper, we study the problem of identifying the best LLM for a new task. To learn about the strengths and weaknesses of candidate LLMs we use benchmark datasets that give insights into the performance of LLMs across tasks and domains. For example, suppose the new task is answering math questions. In that case, it is more intuitive to consider models that do well on other STEM question-answering datasets and discount performance on, e.g., sociology or toxicity detection. We make this idea precise by casting the learning of model strengths as a binary supervised learning task, where the features are input embeddings of samples across tasks and the labels are whether the model ``did well'' on the corresponding inputs, e.g., generated correct class label, answered a question correctly, or followed input instructions sufficiently well. See Figure \ref{fig:diagram} for an illustration. Such information is collected during benchmark evaluations and can be reused for training model routers without having to run expensive LLM inference again. The resulting router is also efficient at test time, as it only requires calling the chosen LLM.

Our contributions are summarized below:

\begin{itemize}
    \item We formalize the problem of learning the strengths and weaknesses of LLMs for downstream \emph{routing}, i.e., selecting the best model, as a collection of binary classification problems. The goal of each classification problem is to predict whether a given LLM will be ``correct'' on an input.
    \item We propose three scores for selecting LLMs for a new task using these correctness predictors. Our third score is designed to account for mistakes a correctness predictor can make on the (out-of-distribution) data from a new task that is likely to be different from the datasets in benchmarks used for training the correctness predictors. We establish connections to meta-learning to obtain theoretical insights into the efficacy of these scores.
    \item We verify the efficiency of our model routing scores empirically on 29 datasets from HELM \citep{liang2022holistic} representing scenarios like question answering, text classification, knowledge, and reasoning, and MixInstruct \citep{jiang2023llm}, a collection of datasets for evaluating instruction following capabilities of LLMs.
    \item We discuss and empirically investigate questions concerning the efficacy and utility of learning LLM routers from benchmarks: generalization of correctness predictors to new tasks, the importance of a larger pool of benchmarks, and the potential of routing smaller LLMs to reduce costs.
\end{itemize}

\section{Related work}

\paragraph{Benchmarking} Comparing models or algorithms across various tasks is a standard practice in ML and AI literature. Prior to Foundation Models \citep{bommasani2021opportunities}, it was typical to apply \emph{the same learning algorithm} to train a model on each of the datasets and compare the performance against other learning algorithms. The UCI Machine Learning Repository \citep{uci} is one prominent example of such a collection of datasets often used to compare learning algorithms. With the emergence of Foundation Models, i.e., models with billions of parameters trained on massive datasets using large compute clusters, the paradigm changed to evaluating \emph{the same model} (or a few-shot tuned version of it) on a variety of tasks \citep{bojar2014findings,goyal2019scaling,li2022elevater}. In the context of Large Language Models, many benchmarks \citep{wang2018glue,wang2019superglue,hendrycks2020measuring,eval-harness,srivastava2022beyond,liang2022holistic,open-llm-leaderboard,jiang2023llm} were proposed to help determine the most capable LLM. Benchmarks typically average the performance of models across tasks and provide a final ranking, discarding the rest of the information. In this work, we use the byproducts of benchmark evaluations, i.e., the per-sample performance of various LLMs across tasks, to learn about their individual strengths and identify the best LLM for a new task.

\paragraph{Model selection} Selecting the best model, or model selection, is a classical topic in statistics and ML \citep{bishop2006pattern,hastie2009elements,raschka2018model}. However, the typical problem setting is quite different: classical methods like cross-validation aim to estimate the population error of a model trained on samples from the population distribution. In other words, the goal is to find the best model for in-distribution test data, i.e., data sampled from the same distribution as the train data. The notion of ``train'' data is quite elusive for LLMs, as they are usually trained on massive datasets with trillions of tokens with a simple task of next token prediction \citep{radford2019language,brown2020language}. However, the tasks we evaluate them on are often more structured, e.g., classification and question-answering, and are specific to domains that may or may not be sufficiently represented in the train data. In addition, techniques like $k$-fold cross-validation require training the model multiple times, which is infeasible for LLMs.

\paragraph{Out-of-distribution model selection} Recognizing the limitations of the model selection methods for in-distribution test data \citep{gulrajani2020search,koh2021wilds}, recent work has proposed a variety of methods to select models when deployed on data that may differ from the train data. These methods rely on ideas such as bootstrapping \citep{xu2022estimation}, reweighing \citep{chen2021mandoline,maity2022understanding}, agreement of models or ensembles \citep{jiang2021assessing,chen2021detecting,ng2023predicting}, or aligning model accuracy in-distribution with a confidence threshold \citep{guillory2021predicting,garg2021leveraging,yu2022predicting}. Most of these methods are nontrivial to extend to generation use-cases of LLMs; some require training multiple models, and some need well-defined in-distribution data related to the new task.

\paragraph{Routing LLMs} Prior work on selecting LLMs primarily considers choosing one that produces the best generation for a given input. 
\citet{liu2021simcls,ravaut2022summareranker,jiang2023llm} train dedicated scoring or ranking models that can be applied to model generations. Unlike our work, these approaches require generating outputs with \emph{every} candidate LLM to make a decision, which can be computationally prohibitive with a large pool of candidate LLMs. FrugalGPT \citep{chen2023frugalgpt} calls LLMs sequentially until a dedicated scoring model deems the generation acceptable. Prior works in this group require training data sufficiently representative of each of the tasks and domains of interest to train the corresponding ranking and scoring models. In this paper, instead, we use data from benchmarks to learn the strengths and weaknesses of LLMs across tasks and domains. The resulting model router requires generating outputs only with the chosen LLM at test time.

\section{Learning from Benchmarks}

We start by introducing notation to describe the majority of NLP benchmarks. Let $\{x^d_1,\dots,x^d_{n_d}\}_{d=1}^D$ be a collection of inputs across $D$ tasks. Each input text $x^d_i$ corresponds to a reference answer $r^d_i$, i.e., an ideal generation for the corresponding input. Finally, there is a metric $F_d(x,o,r)$ that can be task-dependent and measures how well a response $o$ for an input $x$ corresponds to the reference $r$. To test an LLM$_m$, $m\in\{1,\dots,M\}$, on the benchmark, for each task $d=1,\dots,D$, its responses are generated $\{o^d_{im} = \text{LLM}_m(x^d_i)\}_{i=1}^{n_d}$ and compared to the corresponding references to obtain performance metrics $\{f^d_{im} = F_d(x^d_i, o^d_{im}, r^d_i)\}_{i=1}^{n_d}$.\footnote{We omit dependency on the prompt when generating with an LLM and, w.l.o.g., consider the same LLM with a different prompting strategy as a different LLM.} At this point, the majority of the benchmark studies will take a (weighted) average of the performance metrics and report a single score for every LLM to rank them in performance. Instead, we reuse these evaluation results to formulate a supervised learning problem to better understand the strengths and weaknesses of various LLMs based on their performance on data points and tasks.

\paragraph{Supervised learning from benchmarks} Our goal is to learn a simple routing function $g_m(x)$ for each LLM, $m=1,\dots,M$, that can predict $\{f^{d'}_{im}\}_{i=1}^{n_{d'}}$, i.e., the performance of the corresponding LLM on a new task $d'$. Then it is trivial to select the best LLM for this task. For efficiency at test time, we restrict the routers $\{g_m\}_{m=1}^M$ to only depend on the input $x$. This is in contrast to the majority of prior works on LLM routing that first obtain generations with every candidate LLM and then use them to choose the best model \citep{liu2021simcls,ravaut2022summareranker,jiang2023llm}. With thousands of open-source LLMs, it is simply infeasible to obtain generations with every LLM for every input at test time.

To complete the problem formulation, we denote the ``correctness'' of model $m$ on an input $x$ by $y(x, m) \in \{0,1\}$. Correctness is evaluated as follows: generate a response $o^d_{im}$ with LLM $m$ on input $x^d_i$, compare it to the corresponding reference $r^d_i$, and output 1 if the model's response is good enough, i.e., $f^d_{im}>\eta_d$, and 0 otherwise, where $\eta_d$ is some threshold that can be task and/or metric specific. For tasks like classification or multiple-choice QA, $y(x^d_i, m) = f^d_{im}$, while for various evaluation metrics used in summarization and instruction following tasks \citep{zhang2020bertscore,sellam2020bleurt,yuan2021bartscore}, the notion of correctness can help to account for the heterogeneity of popular metrics and task difficulty levels. In Section \ref{sec:mix-instruct}, we also present results with raw metrics instead of correctness.

To train a predictor of an LLM correctness, for each LLM, $m=1,\dots,M$, we solve the following optimization problem:
\begin{equation}
    \label{eq:predictor}
    \min_{g_m} \sum_{d=1}^D \sum_{i=1}^{n_d} \ell(g_m(x^d_i), y(x^d_i, m)),
\end{equation}
where we choose $\ell$ to be a binary cross-entropy loss and $g_m$ is any standard probabilistic classifier, i.e., $g_m(x)$ estimates $P(y(x,m)=1|x)$.

An important consideration when training correctness predictors is their ability to generalize out-of-distribution (OOD) data, since our goal is to estimate LLM performance on a new task $d'$ that has not been seen during training. Training predictors given data from multiple domains that need to generalize to unseen domains is indeed an active area of research in ML literature. For example, \citet{sun2016deep,arjovsky2019invariant} proposed methods for improving OOD generalization when training on data from multiple domains, while \citet{koh2021wilds} proposed a benchmark for OOD generalization demonstrating the challenging nature of the problem in various applications.

In this work, we use a simple model for the correctness predictor: we embed all inputs with a sentence transformer \citep{reimers-2019-sentence-bert} and use a $k$-nearest neighbors classifier \citep{cover1967nearest} as $\{g_m\}_{m=1}^M$. kNN is a simple non-parametric classifier that allows us to fit a potentially complicated decision boundary of an LLM's correctness across multiple tasks without extensive hyperparameter tuning. We choose this approach for learning correctness predictors to emphasize the utility of learning from benchmarks even with a basic method and instead focus on the question specific to our problem that has not been studied in prior works on OOD generalization: \emph{Can we improve the quality of LLM routing with an imperfect correctness predictor}?

\section{LLM routing with (imperfect) correctness predictors}
\label{sec:scores}

The goal of LLM routing is to identify an LLM that will have the highest frequency of being correct on a new task $d'$, given the inputs $\{x^{d'}_i\}_{i=1}^{n_{d'}}$ from this task:
\begin{equation}\textstyle
\label{eq:routing}
    \argmax_m \tilde S (m, d'),\,\textrm{where}\ \tilde S (m, d') = \frac{1}{n^{d'}}\sum_{i=1}^{n_{d'}} y(x^{d'}_i, m).
\end{equation}
Here, $\tilde S (m, d')$ is the ``oracle'' score that we want to estimate. The most intuitive estimator is simply using the correctness predictor\begin{equation}\textstyle
\label{eq:score_prob}
S_1(m, d') = \frac{1}{n^{d'}}\sum_{i=1}^{n_{d'}} g_m(x^{d'}_i),
\end{equation}
but prior work has shown that accurately estimating $P(y|x)$, i.e., calibration, is challenging on OOD data \citep{ovadia2019can}. Meanwhile, $g_m$ may still produce accurate predictions after thresholding the predicted probability even if the class probabilities are not estimated well, which is often the case with neural networks \citep{guo2017calibration}. This motivates another score:
\begin{equation}\textstyle
\label{eq:score_acc}
S_2(m, d') = \frac{1}{n^{d'}}\sum_{i=1}^{n_{d'}} \bar g_m(x^{d'}_i)\text{, where } \bar g_m(x^{d'}_i) = \mathbb{I}(g_m(x^{d'}_i) > t),
\end{equation}
where $t\in (0,1)$ is some threshold, e.g., $t=0.5$, $\mathbb{I}$ is an indicator function, and $\bar g_m (x) \in \{0,1\}$ can be interpreted as the prediction of $g_m$ on $x$. 
This score, however, does not take into account the potential ``imperfection'' of $g_m$, i.e., lower accuracy on OOD data from task $d'$. To address this issue, we model the out-of-distribution confidence of the predictions $\bar g_m$.

\paragraph{A simple OOD confidence model} We model LLM correctness as follows:
\begin{equation}
y(x,m)|x, d' =
    \begin{cases}
         \bar g_m(x) & \text{with probability } p(d',m) \\
         1 - \bar g_m(x) & \text{with probability } 1 - p(d',m),
    \end{cases}
\end{equation}
i.e., $p(d',m)\in [0,1]$ is the probability that $\bar g_m$ is the correct prediction on a data point from task $d'$. The above model can be condensed as follows:
\begin{equation}
\label{eq:model_conf}
    y(x,m)|x,d' \sim \text{Bern}(\bar g_m(x)p(d',m) + (1-\bar g_m(x))(1-p(d',m))).
\end{equation}
In this simplistic (and approximate) model, we assume that $p(d',m)$ does not depend on the input $x$ after conditioning on the task $d'$. The assumption is analogous to the homoscedastic error term assumption in linear regression models and allows us to interpret $p(d',m)$ as the marginal/overall accuracy of $\bar g_m$ on data from the task $d'$.

Prior work has studied the problem of estimating OOD accuracy given the inputs from a new task, but existing methods are challenging to combine with our approach. For example, \citet{garg2021leveraging} learn a threshold on model confidence, which is hard to apply when using kNN classifiers, and \citet{ng2023predicting} require data augmentations that can be challenging to identify given the diversity of tasks in benchmarks. Prior methods also do not take into account the partition of the train data into tasks inherent in our problem setting.

We treat the problem of estimating $p(d',m)$ as a supervised learning task, taking advantage of the task partition. Specifically, we assign a task descriptor $u(d) \in \mathbb{R}_{+}$ to every task that measures the distance of the data from task $d$ to the other available tasks combined. Then we collect the values of $p(d,m)$, i.e., the accuracy of $\bar g_m$ on $d$, and fit a non-parametric regression model to predict $p(d,m)$ from $u(d)$. At test time, we compute $u(d')$ for a new task $d'$ based on the inputs $\{x^{d'}_i\}_{i=1}^{n_{d'}}$ and predict $p(d',m)$ using the fitted regression model. In general, one can consider more sophisticated, higher-dimensional task descriptors $u(d)$, but here, for simplicity, we keep it 1-dimensional and use a Gaussian kernel smoother (also known as the Nadaraya-Watson estimator) as the non-parametric regressor. We provide details in Appendix \ref{sup:data-distance}.

Finally, given the model of LLM correctness \ref{eq:model_conf}, $\tilde {\mathbf{S}} (m, d')$ is a random variable (corresponding to $\tilde S (m, d')$) distributed as a (scaled) sum of two Bernoulli random variables. To arrive at our final score for LLM routing, we take its expected value:
\begin{equation}
\label{eq:score_conf}
    S_3(m, d') = S_2(m, d')p(d', m) + (1 - S_2(m, d'))(1 - p(d', m)).
\end{equation}

When selecting an LLM with $S_3$, we consider an alternative to the $\argmax$ criterion based on our correctness model \ref{eq:model_conf}, which defaults to the best model on average across benchmark datasets when we are not sufficiently confident that a candidate model will be better: 
\begin{equation}
\label{eq:select_s3}
    \begin{cases}
    m_3 & \text{if }P(\tilde{\mathbf{S}}(m_3, d') > \tilde{\mathbf{S}}(m^*, d')) > \eta \\
    m^* & \text{otherwise},
    \end{cases}
\end{equation}
where $m_3 = \argmax_m S_3(m,d')$, i.e., the best LLM for the new task according to $S_3$, and $m^* = \argmax_m \sum_{d=1}^D \tilde S (m, d)$, i.e., the best LLM across the benchmark datasets.
In the experiments, we set $\eta = 0.6$.

We summarize our LLM routing procedures in Appendix \ref{sup:data-distance}.

\subsection{Connection to meta-learning}

The OOD confidence model in \eqref{eq:model_conf} is a meta-model of routing across multiple tasks, and fitting it entails a form of meta-learning. Consider the meta-learning problem 
\begin{equation}\textstyle
\min_{g_m, p(\cdot,m)} \sum_{d=1}^D \sum_{i=1}^{n_d} \ell(\bar{g}_m(x_i^d)p(d,m) + (1-\bar{g}_m(x_i^d))(1-p(d,m)), y(x^d_i, m)),
\label{eq:meta-predictor}
\end{equation}
where $\bar g_m$ and $p(\cdot,m)$ are meta-parameters and adaptation step $\bar g_m\to \bar g_m(\cdot) p(\cdot,m)$ adaptively shrinks the router output towards ambiguity. We exploit this connection to theoretically demonstrate the potential advantages of routing LLMs using $S_3$ over $S_2$.

In expectation/in the population, \eqref{eq:meta-predictor} fits a larger model class than \eqref{eq:predictor}, so the risk of the adaptively shrunken router is at most that of the non-adaptive router:
\begin{equation}
\begin{aligned}
&\textstyle\sum_{d=1}^D\mathbf{E}\big[\ell(\bar{g}_m(X^d)p(d,m) + (1-\bar{g}_m(X^d))(1-p(d,m)), y(X^d, m))\big] \\
&\textstyle\quad\le \sum_{d=1}^D\mathbf{E}\big[\ell(\bar{g}_m(X^d), y(X^d, m))\big].
\end{aligned}
\label{eq:adaptivity-improvement}
\end{equation}
This suggests (subject to standard assumptions on the loss function) that adaptive shrinkage routing leads to better approximations of the oracle router. Lemma \ref{lem:adaptive-improvement} confirms this intuition.

\begin{lemma}
\label{lem:adaptive-improvement}
Let $\ell(y_1,y_2) = \rho(y_1 - y_2)$ for some subadditive $\rho:\mathbf{R}\to\mathbf{R}$ (e.g.\ $\rho(x) = \frac12x^2$ for the square loss). We have 
\[
\begin{aligned}
\ell(S_2,\widetilde{S}) &\le \mathbf{E}\big[\ell(\bar{g}_m(X^d),y(X^d,m))\big]), \\
\ell(S_3,\widetilde{S}) &\le \mathbf{E}\big[\ell(p(d,m)\bar{g}_m(X^d) + (1-p(d,m))(1-\bar{g}_m(X^d)),y(X^d,m))\big]).
\end{aligned}
\]
\end{lemma}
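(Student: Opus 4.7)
The plan is to treat $S_2(m,d')$ and $S_3(m,d')$ as per-sample averages over the task-$d'$ inputs and then apply the subadditivity of $\rho$ to move it inside the average. First, I would pair the definition $S_2(m,d')=\frac{1}{n_{d'}}\sum_i \bar g_m(x_i^{d'})$ with $\widetilde S(m,d')=\frac{1}{n_{d'}}\sum_i y(x_i^{d'},m)$ to get
\[
S_2-\widetilde S \;=\; \frac{1}{n_{d'}}\sum_{i=1}^{n_{d'}}\bigl(\bar g_m(x_i^{d'}) - y(x_i^{d'},m)\bigr).
\]
Interpreting subadditivity of $\rho$ in the averaged form $\rho\bigl(\tfrac1n\sum a_i\bigr)\le \tfrac1n\sum\rho(a_i)$ (which is simply Jensen's inequality for convex $\rho$, and reduces to the standard mean-squared-error bound when $\rho(x)=\tfrac12x^2$), this yields
\[
\ell(S_2,\widetilde S) \;=\; \rho(S_2-\widetilde S) \;\le\; \frac{1}{n_{d'}}\sum_i \rho\bigl(\bar g_m(x_i^{d'}) - y(x_i^{d'},m)\bigr).
\]
Taking expectations over i.i.d.\ $x_i^{d'}$ drawn from the data distribution of task $d'$ collapses the RHS to a single expectation $\mathbf E[\ell(\bar g_m(X^{d'}),y(X^{d'},m))]$, which gives the first inequality.

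For the second bound the key observation is that $S_3$ itself can be rewritten as a per-sample average. Because $S_3 = p(d',m)\,S_2 + (1-p(d',m))(1-S_2)$ is affine in $S_2$, substituting the definition of $S_2$ gives
\[
S_3 \;=\; \frac{1}{n_{d'}}\sum_i h(x_i^{d'}), \qquad h(x) \;:=\; p(d',m)\bar g_m(x) + (1-p(d',m))(1-\bar g_m(x)).
\]
Now that $S_3$ is in the same averaged form as $S_2$, I would repeat the subadditivity-plus-expectation step verbatim with $h$ in place of $\bar g_m$, and obtain the second claim.

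The main obstacle I anticipate is pinning down the intended meaning of ``subadditive.'' The classical definition $\rho(x+y)\le \rho(x)+\rho(y)$ fails for $\rho(x)=\tfrac12 x^2$, so either the condition should be read as the averaged/Jensen-style inequality above (which amounts to convexity with $\rho(0)\le 0$ and covers the stated square-loss example), or the argument should simply be restated using Jensen's inequality directly. A secondary subtlety is that the LHS of each claim is a deterministic scalar while the RHS is a population expectation; I would resolve this by either viewing $S_2,S_3,\widetilde S$ as their population limits and applying Jensen to $Z=\bar g_m(X)-y(X,m)$ (respectively $Z=h(X)-y(X,m)$) in one step, or by keeping them as empirical averages and passing to expectations over the i.i.d.\ samples at the very end. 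Either route uses only the single subadditivity/convexity step, so the proof is conceptually short; the work is almost entirely in setting up the averaged representation of $S_3$.
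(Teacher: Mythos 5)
Your proposal matches the paper's proof in its essential step---applying Jensen's inequality to move $\rho$ inside the average/expectation, with the same affine rewriting of $S_3$ into a per-sample average for the second bound---differing only in that you keep $S_2,S_3,\widetilde S$ as finite empirical means and pass to expectations at the end, whereas the paper treats them directly as population expectations from the outset. Your flagged obstacle is a genuine issue in the statement: $\rho(x)=\tfrac12 x^2$ is not subadditive ($\rho(2)=2>1=\rho(1)+\rho(1)$), and the paper's proof in fact invokes convexity (while mis-asserting that ``subadditive functions are convex''), so reading the hypothesis as convexity, as you propose, is exactly what the argument requires.
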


We present the proof in Appendix \ref{sup:lemma}. Combining \eqref{eq:adaptivity-improvement} and Lemma \ref{lem:adaptive-improvement}, we expect the adaptive router based on $S_3$ to outperform its non-adaptive counterpart based on $S_2$. That said, it is unclear whether adaptive shrinkage will improve the performance of the adaptive router in finite samples: the expected performance of the adaptive router may be offset by the inflation in variance from fitting the larger (adaptive) model class. Fortunately, our empirical results show that task-specific adaption, i.e., using $S_3$ as a score for routing, generally improves performance. The two-step method for fitting $\bar g_m$ and $p$ in Section \ref{sec:scores} approximately minimizes \eqref{eq:meta-predictor} with a single Gauss-Seidel pass through the decision variables.

\section{Experiments}

\subsection{Model routing on HELM}
\label{sec:helm}

We explore the benefits and challenges of learning from benchmarks using the HELM \citep{liang2022holistic} benchmark.

\paragraph{Data} We select 29 datasets representing scenarios such as question answering (including a subset of MMLU \citep{hendrycks2020measuring}), text classification, language, knowledge, and reasoning, among others. We present additional information about these datasets in Table \ref{tab:helm_dataset_desc}.

\paragraph{Models} We evaluate 18 open-source models ranging in size from 3B to 70B, including base and chat variations of Llama 2 in different sizes. All models are summarized in Table \ref{tab:models}.

\paragraph{Model routing} The best model on average (BMA) across the 29 considered HELM datasets is \texttt{llama-2-70b} (followed by \texttt{llama-2-70b-chat}). Our goal is to show that learning model routers from benchmark data can simultaneously outperform BMA and reduce inference costs by recommending smaller LLMs for tasks where they can perform well. We compare models selected with the three scores, $S_1, S_2$, and $S_3$, presented in Section \ref{sec:scores} to the performance of \texttt{llama-2-70b}, i.e., the BMA. All correctness predictors $g_m$s are kNN classifiers with $k=5$.

We also report the performance of the best model according to the ``oracle'' score $\tilde S$, which is the upper bound on what can be achieved with model routing, and $\tilde S_3$, which corresponds to $S_3$ with the true $p(d', m)$, i.e., the accuracy of (an imperfect) $g_m$ on $d'$. Finally, we compare to scoring LLMs with the average log-likelihood (LL) (or negative perplexity) of the response they generate on the inputs from the task of interest. 
This last baseline requires producing generations with \emph{every} LLM at test time to make a selection, while all of our scores only require generating with the chosen LLM.

\begin{table}[t]
    \caption{LLM routing on HELM: Comparison of various model scores for LLM routing with the Oracle model selection and performance of the best model on average (BMA).}
    \vspace{-2mm}
    \label{tab:helm}
    \centering
\begin{tabular}{lccccccc}
\toprule
 & Acc. & Ratio to Best & Pearson & Spearman & \% BMA & \# Params & Rank \\
\midrule
$S_1$ eq. \ref{eq:score_prob} & 0.662 & 0.855 & 0.685 & 0.465 & 0.17 & \underline{40.3B} & 6.172 \\
$S_2$ eq. \ref{eq:score_acc} & 0.676 & 0.868 & 0.636 & 0.468 & 0.10 & 44.3B & 5.897 \\
$S_3$ eq. \ref{eq:score_conf}, \ref{eq:select_s3} & \underline{0.694} & \underline{0.898} & \underline{0.727} & \underline{0.492} & 0.48 & 49.8B & \underline{5.310} \\
$S_3$ true $p$ & \textbf{0.735} & \textbf{0.944} & \textbf{0.799} & \textbf{0.596} & 0.22 & \textbf{33.8B} & \textbf{3.800} \\
LL & 0.684 & 0.869 & 0.714 & 0.459 & 0.10 & --- & 6.517 \\
BMA & 0.688 & 0.884 & --- & --- & 1.00 & 70.0B & 6.069 \\
\midrule
Oracle & 0.773 & 1.000 & --- & --- & 0.21 & 29.1B & 1.000 \\
\bottomrule
\end{tabular}
\end{table}

\paragraph{Results} We conduct 29 sets of experiments, each time selecting 28 of the datasets as the benchmark data for training the LLM routers and using the remaining task as the new task $d'$ for evaluating the quality of the LLM selection for this task. In Table \ref{tab:helm} we report averages across experiments for the performance of the selected model (Acc.), ratio of this performance to the performance of the best model for the corresponding new task (Ratio to Best), Pearson and Spearman rank correlations between model accuracies and model scores, number of parameters of the selected model (\# Params), rank of the selected model out of 18 considered (Rank). We also report the fraction of times the BMA is selected by a method (\% BMA). Best results are highlighted with bold and second best with an underline (excluding Oracle).

First, we notice that accounting for imperfections of the correctness predictors (their average accuracy is 0.59) has clear benefits: when we have access to the true accuracy of correctness predictors, the corresponding score, $S_3$ true $p$, noticeably outperforms all other scores. Our simple kernel smoothing estimator of this accuracy (MAE$=0.116$) allows us to obtain a practical model routing score $S_3$ that outperforms BMA (\texttt{llama-2-70b}) while choosing smaller models for some of the tasks (as evident by the average number of parameters of the chosen models). $S_2$ sacrifices some accuracy but chooses even smaller performant models. Overall, learning from benchmarks allows us to obtain LLM routers that can improve overall performance while utilizing smaller models where appropriate. Finally, we note that log-likelihood (LL) also performs well, however, routing with it requires passing each test input through \emph{each} candidate LLM, which have 347B parameters in total.

\begin{wrapfigure}[16]{r}{0.45\linewidth}
\vspace{-0.2in}
    \centering
    \includegraphics[width=\linewidth]{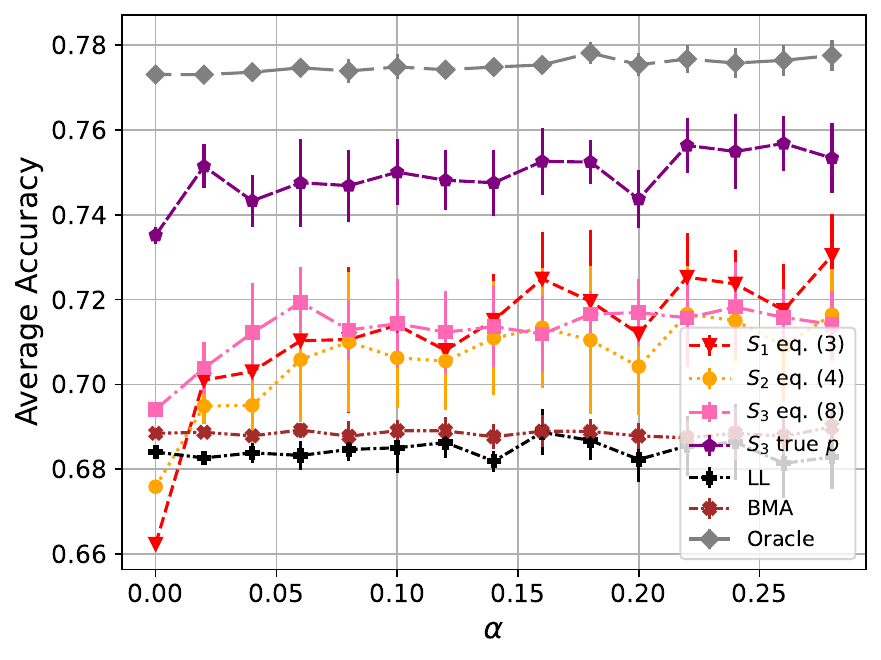}
    \vspace{-0.32in}
    \caption{Using $\min(\alpha n^{d'}, 50)$ training samples from $d'$ to reduce OOD gap.}
    \label{fig:helm-ood}
\end{wrapfigure}

\paragraph{Reducing the OOD gap} The average accuracy of correctness predictors across tasks and models for the experiments in Table \ref{tab:helm} is 0.59. It is a fairly low accuracy for binary classification, which we attribute to the diversity of tasks in the HELM benchmark leading to substantial distribution shifts when predicting the correctness of LLMs on held-out tasks. We investigate the quality of model routing when we reduce this OOD gap. A simple strategy to reduce this gap is to collect a small number of labeled in-distribution samples. This can be accomplished by asking a practitioner to provide reference answers ($r_i^{d'}$s) for a small number of inputs from their task, allowing us to evaluate the correctness of candidate LLMs on these in-distribution inputs and use them to improve correctness predictors.

We simulate this scenario by moving $\min(\alpha n^{d'}, 50)$ samples from the data from a new task $d'$ to the data for training the correctness predictors. The upper limit of 50 samples is to maintain practical utility while accounting for varying dataset sizes (see Table \ref{tab:helm_dataset_desc}). We conduct 29 sets of experiments, repeating each one 10 times to obtain standard deviations (randomness is due to random selection of data points from a new task for reducing the OOD gap). We summarize the average accuracy of models selected with various routing scores for varying $\alpha$ in Figure \ref{fig:helm-ood} ($\alpha=0$ corresponds to Table \ref{tab:helm}). Results for Pearson correlation are in Figure \ref{fig:helm-ood-sup}(a).

We see that even a small number of in-distribution samples ($\alpha=0.05$) can reduce the OOD gap (corresponding average accuracy of correctness predictors is 0.65; see Figure \ref{fig:helm-ood-sup}(b)) and noticeably improves the model routing performance of all three of our scores. When the number of in-distribution samples further increases, $S_1$ starts to outperform $S_3$. We attribute this observation to kNN being well-calibrated in-distribution, i.e., the correctness predictors provide reliable estimates of their own confidence $P(y|x)$, which are used by $S_1$ in \eqref{eq:score_prob}.  
Finally, we note a fairly large variance in the results due to random selection of the in-distribution training samples from $d'$, suggesting that active learning \citep{settles2009active} can help to further improve LLM routing.

\subsection{Model Routing on Mix-Instruct}
\label{sec:mix-instruct}

\begin{wrapfigure}[18]{r}{0.5\linewidth}
\vspace{-0.22in}
    \centering
    \includegraphics[width=\linewidth]{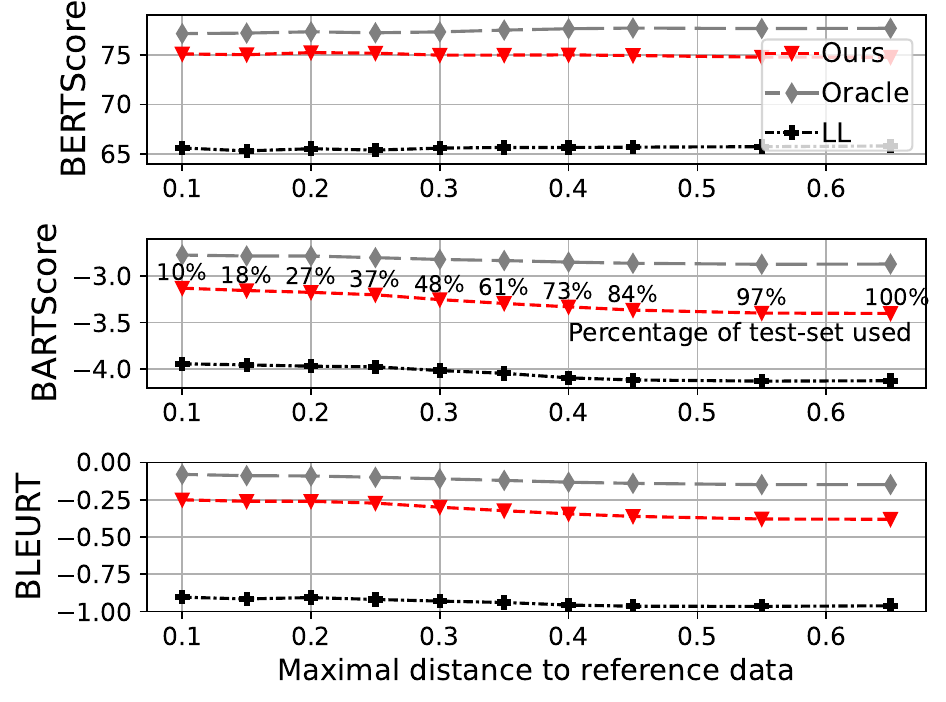}
    \vspace{-0.3in}
    \caption{Average metrics on subsets of the MixInstruct test set, defined by limiting the maximal average distance between test instances and their closest neighbors in the reference (train) set.} 
    \label{fig:mixinstruct_per_dist_thresh}
\end{wrapfigure}

We now consider a different setting and task type, the MixInstruct benchmark dataset \citep{jiang2023llm}.
The dataset is composed of instruction-following tasks, divided into train/validation/test sets of 100K/5K/5K samples, and includes evaluations of $N=11$ open-source LLMs using common metrics, e.g. BERTScore \citep{zhang2020bertscore}, BARTScore \citep{yuan2021bartscore}, and BLEURT \citep{sellam2020bleurt}.
In \citet{jiang2023llm}, this benchmark was used to compare different LLM ranking methods in per-instance model selection.
We follow the same setting and apply our score $S_1(m,d')$ to the test set, per-instance, where we use the 100K-sample train set as the benchmark data for training our LLM router. 
See Appendices \ref{sup:data-distance} and \ref{sup:mix-instruct} for details on the score computation and the experiment parameters, respectively. 
Due to the per-instance setting, and since the test set was constructed from in-distribution data, we focus on our simplest router model $S_1$, \eqref{eq:score_prob}.

We compare our approach with the scoring methods examined by \citet{jiang2023llm}, as well as scoring based on the average log-likelihood (LL) of the model responses to the inputs. Additionally, we present the metrics for the best models on average (BMA), Open-Assistant \citep{oasst2023}
and Vicuna \citep{vicuna2023}.
We report the results of BERTScore, BARTScore and BLEURT in Table \ref{tab:mixinstruct_per_point_results}, along with the number of model calls per instance (MCPI) performed during inference time. 
All compared methods require model generations for every point in the test set, by each of the examined LLMs, whereas our approach requires only one model generation and one call to some general embedding function.
In addition, all methods, except for LL, require training auxiliary language models, whereas our approach is a simple kNN classifier on the embedded inputs.
While our approach does not consistently outperform the compared methods, these results demonstrate the potential of using benchmark datasets for model routing with significantly better inference-time efficiency.

\begin{table}[t]
\caption{Average metrics for per-instance LLM selection on the test set of MixInstruct. MCPI denotes model calls per instance, for $N$ models. Best results are highlighted with bold and second best with an underline (excluding Oracle).}
\vspace{-2mm}
\label{tab:mixinstruct_per_point_results}
\centering
\begin{tabular}{lcccc} 
 \toprule
 & BERTScore $\uparrow$ & BARTScore $\uparrow$ & BLEURT $\uparrow$ & MCPI\\ [0.1ex] 
 \midrule
 Random & 66.36 & -3.76 & -0.77 & - \\ 
 LL & 65.83 & -4.12 & -0.96 & $N$ \\
 BMA: Open-Assisant & \underline{74.68} & -3.45 & -0.39 & - \\
 BMA: Vicuna  & 69.60 & -3.44 & -0.61 & - \\
 MLM-Scoring \citep{salazar2020masked} & 64.77 & -4.03 & -0.88 & $N$ \\
 SimCLS \citep{liu2021simcls} & 73.14 & \underline{-3.22} & \underline{-0.38} & $N$ \\
 SummaReranker \citep{ravaut2022summareranker} & 71.60 & -3.25 & -0.41 & $N$ \\
 PairRanker \citep{jiang2023llm} & 72.97 & \textbf{-3.14} & \textbf{-0.37} & $N$ \\
 Ours & \textbf{74.75} & -3.40 & \underline{-0.38} & \textbf{2} \\
 \midrule
 Oracle & 77.67 & -2.87 & -0.15 & $N$ \\
 \bottomrule
\end{tabular}
\end{table}

\paragraph{Effect of benchmark dataset sparsity}
\sloppypar
To highlight the potential of our approach in this setting, we examine the effect of the reference benchmark data sparsity. 
We apply our method to different subsets of the test set, $X_{\mathrm{test}}$, where the subsets are defined by limiting the maximal average distance of each test set point to the closest points from the reference (train) set, denoted by $\mathrm{NN}_{\mathrm{train}}$, i.e. $X'_C = \left\lbrace x'\in X_{\mathrm{test}} \Big| \tfrac{1}{\left\vert \mathrm{NN}_{\mathrm{train}}(x')\right\vert}\sum_{x\in \mathrm{NN}_{\mathrm{train}}(x')}\mathrm{dist}(x',x) < C\right\rbrace$, where $C$ is the maximal average distance and $X'_C$ is the resulting subset of the test set. Figure \ref{fig:mixinstruct_per_dist_thresh} presents the metric scores for the different subsets using our method, the oracle (best possible choices), and LL scoring. We also report the percentage of the test set that is used in each subset.
This figure depicts that our predictor approaches the oracle metrics as the average distance to the reference points decreases. 
This suggests that adding more benchmark datasets, to reduce the sparsity of the reference space, may lead to better LLM selections with our approach.

\section{Discussion and Conclusion}

\begin{wrapfigure}[14]{r}{0.45\linewidth}
\vspace{-0.45in}
    \centering
    \includegraphics[width=\linewidth]{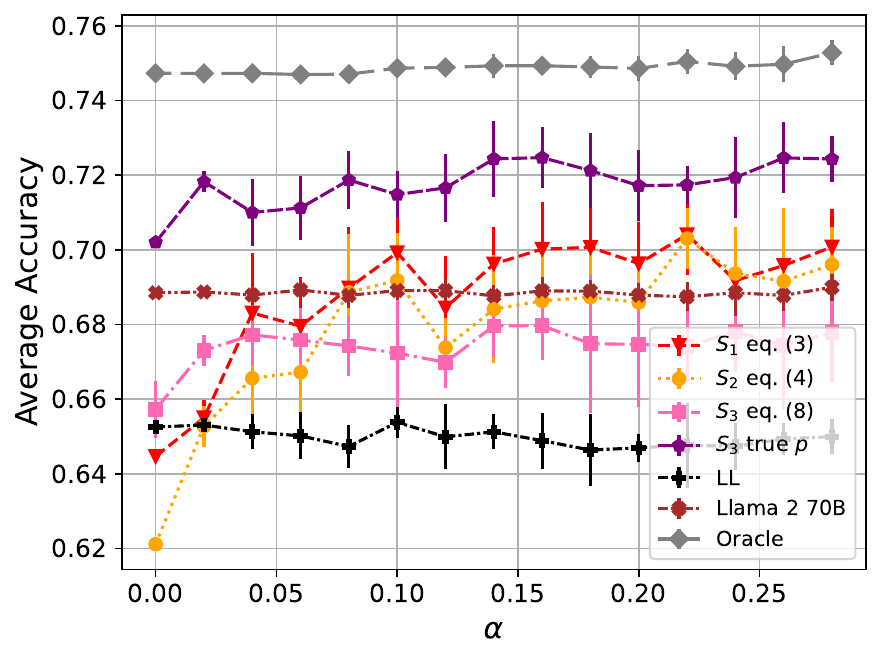}
    \vspace{-0.31in}
    \caption{LLM routing with $\leq13$B parameter models compared to \texttt{Llama 2 70B}.}
    \label{fig:helm-ood-small}
\end{wrapfigure}

\paragraph{How useful are smaller LLMs?} While a given LLM may work best on average, these models tend to be the biggest and therefore most expensive to run.  Practitioners can achieve gains in cost, compute, and latency if we can successfully predict whether a smaller LLM can be adequate for a given task. Identifying good smaller models for tasks of interest will also redefine the cost/benefit tradeoff behind automating certain tasks, potentially incentivizing the automation of new tasks that were previously cost-prohibitive to automate with larger LLMs.

To evaluate the potential of smaller LLMs we revisit our HELM experiment in Figure \ref{fig:helm-ood}. In Figure \ref{fig:helm-ood-small}, we perform LLM routing using \emph{only models with $\leq$ 13B parameters} and compare it to the performance of \texttt{Llama 2 70B}. Oracle's performance demonstrates that it is conceptually possible to outperform a large model by routing smaller LLMs. Results with our scores $S_1$ and $S_2$ (see Figure \ref{fig:helm-13b-individual} for breakdown by scores) demonstrate that it is also practically feasible to match the performance of the 70B model by combining learning from benchmarks with a small number ($\alpha=0.04$, i.e., 2-40 samples) of labeled samples from a new task that a practitioner can provide to save on the inference costs in their LLM application.

\begin{wrapfigure}[16]{r}{0.45\linewidth}
\vspace{-0.15in}
    \centering
    \includegraphics[width=\linewidth]{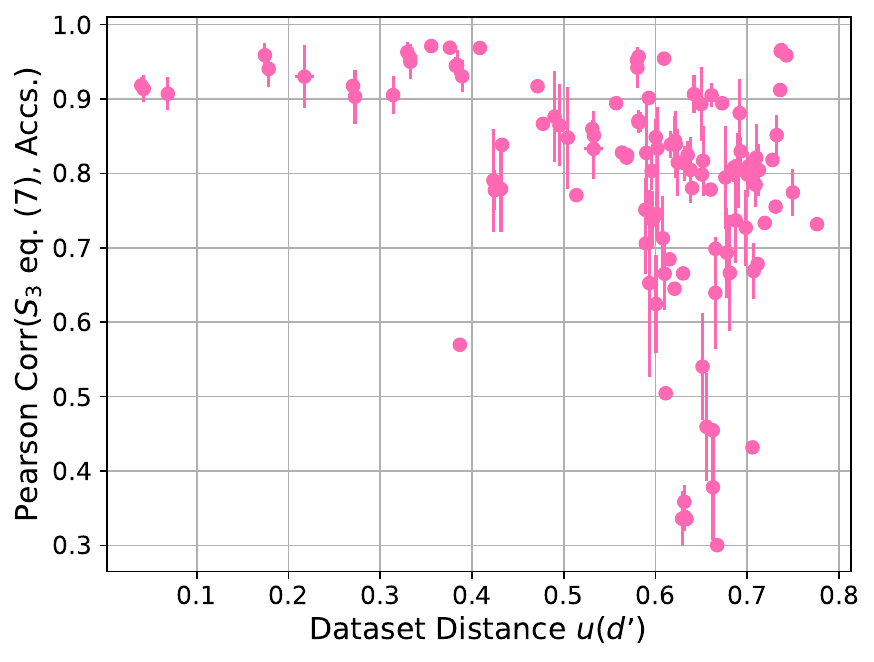}
    \vspace{-0.3in}
    \caption{Correlation($S_3$, Accs.) and $u(d')$.}
    \label{fig:distance-corr}
\end{wrapfigure}

\paragraph{Learning from more benchmarks} We anticipate learning LLM routers from benchmarks to be the most effective when new tasks are similar to the benchmark tasks, thus reducing the OOD gap without any labeling burden for a practitioner. To empirically investigate this hypothesis, in Figure \ref{fig:distance-corr} we visualize the relation between the quality of model routing with $S_3$, measured with Pearson correlation between model scores and accuracies of candidate LLMs, and the distance $u(d')$ from a new task $d'$ to the available benchmark data for training the routers.
In this experiment, we aggregate results across different $\alpha$ values from Figure \ref{fig:helm-ood}. For smaller distance values the correlation is approaching 1, while for large distances it sometimes deteriorates. Results for other scores demonstrate a similar trend and are presented in Appendix \ref{sup:corr-distance} along with additional details. 
This experiment and the benchmark dataset sparsity analysis presented in Figure \ref{fig:mixinstruct_per_dist_thresh} for MixInstruct illustrate that learning with \emph{more benchmarks} can improve the efficacy and reliability of the LLM routers as new tasks are more likely to be closer to a large collection of datasets.

\paragraph{Future work} Our work demonstrates the potential of learning from benchmarks for LLM routing and investigates 3 model scores in the context of OOD generalization when routing LLMs for new tasks. We summarize potential next steps for improving the quality and efficacy of LLM routers.

The major challenge of LLM routing is OOD generalization of correctness predictors. Thus, using more benchmarks and modern methods for improving OOD generalization to learn correctness predictors is a promising next step. A practitioner can also provide labels for a few samples from their task, possibly guided by active learning techniques, to adapt or fine-tune correctness predictors. Even when reducing the OOD gap is too challenging, our score accounting for the (potentially low) accuracy of correctness predictors demonstrated strong results when this accuracy, $p(d',m)$, is known for a new task, thus encouraging the development of methods for estimating it better.

We also anticipate that routing ``expert'' LLMs fine-tuned for a specific domain can improve the results. Regions of the sample space where such models are ``correct'' should mostly align with the domains of their expertise (recall Figure \ref{fig:diagram}), making it easier to learn the corresponding correctness predictors, and simplifying LLM routing when a new task is from a specific domain. 

Our experiments in Figure \ref{fig:helm-ood-small} demonstrate the utility of LLM routing with \emph{smaller} models, which can reduce costs and facilitate the use of LLMs in a broader set of domains. Thus, we want to explore modifications to our scores that will encourage the selection of smaller LLMs when their anticipated performance is comparable to the larger, more reliable models. Prior work on frugal API selection \citep{chen2020frugalml,chen2023frugalgpt} provides a good starting point to explore this direction.

\bibliography{paper}
\bibliographystyle{apalike}

\newpage
\appendix

\section{Correctness predictors and confidence estimation}
\label{sup:data-distance}

We provide additional details on the correctness predictors used in our experiments, along with more details on the dataset distance and the Gaussian kernel smoother for estimating the accuracy of the correctness predictors on new tasks, $p(d',m)$s. 

\paragraph{Correctness predictors in our experiments}
While any probabilistic classifier may fit our setting, in the experiments we mainly used a simple kNN classifier, applied in an embedded space. 
Recall that we have $D$ benchmark datasets with inputs $\{x^d_i\}_{i=1}^{n_d}$ for $d=1,\dots,D$. To compute our correctness predictor based on the benchmark datasets, we first embed all their inputs. We denote the combined set of embedded inputs from the benchmark datasets as $\mathcal{D} = \{\phi(x^d_1),\dots,\phi(x^d_{n_d})\}_{d=1}^D$, where $\phi$ is a sentence transformer \citep{reimers-2019-sentence-bert}. We use \texttt{all-mpnet-base-v2} from Hugging Face in all experiments.
Given a sample $x_i^{d'}$ from a new task $d'$, we embed it using the same $\phi$ and define the classifier, $g_m$, for each model $m$ by:
\begin{equation}
    g_m\left(x_i^{d'}\right) = \frac{1}{k}\sum_{e\in \text{NN}(\phi(x_i^{d'}),k,\mathcal{D})} y(e,m),
    \label{eq:knn_classifier}
\end{equation}
where $y(e,m)\in\{0,1\}$ is the correctness of model $m$ on the (embedded) input $e$, and $\text{NN}(\phi(x_i^{d'}),k,\mathcal{D})$ is the set of $k$ closest embedded neighbors from $\mathcal{D}$ to the new embedded sample $\phi(x_i^{d'})$, according to the cosine distance.
Then, $\bar g_m$, as defined in \eqref{eq:score_acc}, is a binary kNN classifier.
Finally, we compute the per-model correctness predictors, $S_1(m,d')$ and $S_2(m,d')$, for the new task $d'$, according to \eqref{eq:score_prob} and \eqref{eq:score_acc}, respectively.

Next, we describe a method for estimating the probability $p(d',m)$ in our confidence model and the $S_3(m,d')$ score, \eqref{eq:score_conf}. 
This method comprises a dataset distance and a kernel smoother, defined as follows.

\paragraph{Dataset distance}
Our dataset distance $u(d)$ is a one-sided variant of the Chamfer distance with extended neighborhood size. We define it formally below:
\begin{equation}
\label{eq:data-distance}
    u(d) = \frac{1}{n_{d}} \sum_{i=1}^{n_{d}} nn(x^{d}_i,\mathcal{D}_{-d}),
\end{equation}
where $\mathcal{D}_{-d}$ is the set of (embedded) inputs from the $D$ datasets excluding inputs from $d$ (for a new task $d'$, $\mathcal{D}_{-d'} = \mathcal{D}$ since $d'$ is not part of the $D$ benchmark datasets we use for training LLM routers), and $nn(x^{d}_i,\mathcal{D}_{-d})$ is the average distance from the input $x^{d}_i$ to its closest $\kappa$ neighbors in $\mathcal{D}_{-d}$:
\begin{equation}
\label{eq:point-distance}
 nn(x,\mathcal{D}) = \frac{1}{\kappa} \sum_{e \in \text{NN}(\phi(x), \kappa, \mathcal{D})} \text{cosine}(\phi(x),e),
\end{equation}
where $\text{NN}(\phi(x), \kappa, \mathcal{D})$ is the set of $\kappa$ closest embedded neighbors of $\phi(x)$ in $\mathcal{D}$ according to cosine distance. We set $\kappa=19$ for the dataset distance in all experiments.

\paragraph{Kernel smoother} For each LLM $m=1,\dots,M$, to obtain the corresponding kernel smoother estimate we iterate over the available benchmark datasets, each time holding one out and computing pairs $(u(d), p(d,m))$ for held out dataset $d$, where $p(d,m)$ is the accuracy of $g_m$ on data from $d$ after training on $\mathcal{D}_{-d}$. We repeat this process 10 times for 15 values of in-distribution mixing parameter $\alpha$ (similar to the experimental setup in Figure \ref{fig:helm-ood} but using benchmark datasets $d=1,\dots,D$ instead of $d'$) to obtain the training set of distance-accuracy pairs $\{u_z, p_z(m)\}_{z=1}^Z$. 
In the HELM experiments in Section \ref{sec:helm}, $Z=28*10*15=4200$ (28 is the number of datasets from HELM after holding one out as the new task for evaluating the performance). 

For a new task $d'$, we compute $u(d')$ using the inputs from this task and our benchmark datasets and estimate $p(d',m)$ for each $m$ with simple Gaussian kernel smoothing:
\begin{equation}
\label{eq:kernel_smoother}
    p(d', m) = \frac{\sum_{z=1}^Z p_z(m) \mathcal{K}(u(d'), u_z)}{\sum_{z=1}^Z \mathcal{K}(u(d'), u_z)},
\end{equation}
where $\mathcal{K}(u(d'), u_z) = \exp\left(-\frac{(u(d') - u_z)^2}{2\sigma^2}\right)$. We set $\sigma=0.09$ in all experiments which is the value we found to perform well through some preliminary experimentation.

Finally, we note that the proposed confidence model, including the definitions of the dataset distance and kernel smoother, can be combined with any classifier $g_m$, and is not restricted to the kNN classifier used for the correctness predictor in our experiments.

\paragraph{Additional notes regarding $S_3$}
Recall that when selecting a model with $S_3(m,d')$ we use an additional step described in \eqref{eq:select_s3} that facilitates the selection of the best model on average when we are not sufficiently confident in the model with the highest $S_3(m,d')$ score. Probability expression, $P(\tilde{\mathbf{S}}(m_3, d') > \tilde{\mathbf{S}}(m^*, d'))$, required for this step is not available in closed form, as $\tilde{\mathbf{S}}$ is distributed as a (scaled) sum of two Bernoulli random variables, but it is straightforward to estimate via Monte Carlo sampling from the corresponding Bernoulli distributions.

When reporting correlations for $S_3$ (e.g., Pearson and Spearman correlations in Table \ref{tab:helm}), we use $S_3(m,d')$ as is, i.e., as defined in \eqref{eq:score_conf}.

\section{Additional results for model routing on HELM}



\begin{figure*}
\begin{center}
\subfigure[Pearson correlation]{\includegraphics[width=0.32\textwidth]{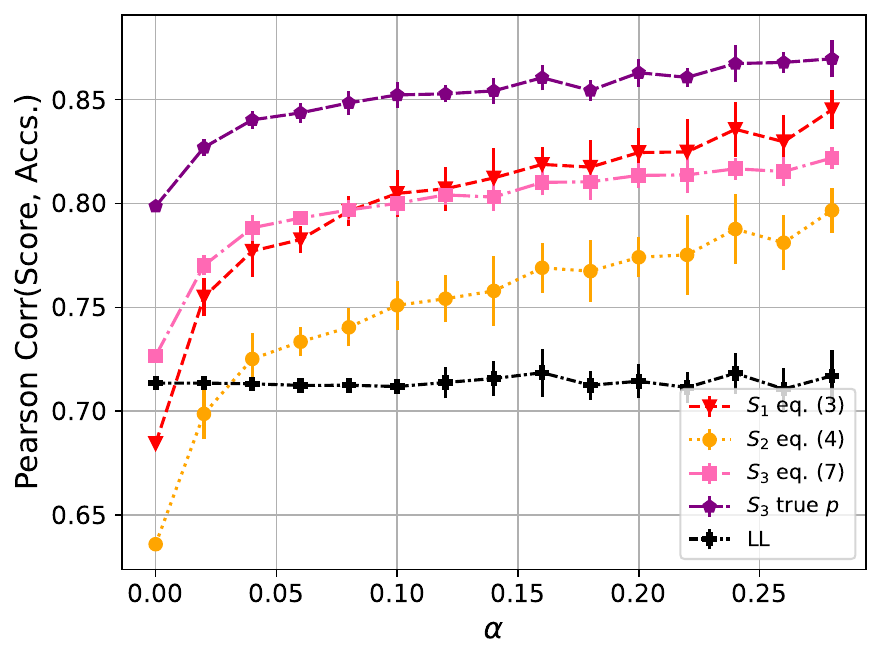}}
\subfigure[Accuracy of $g_m$s]{\includegraphics[width=0.32\textwidth]{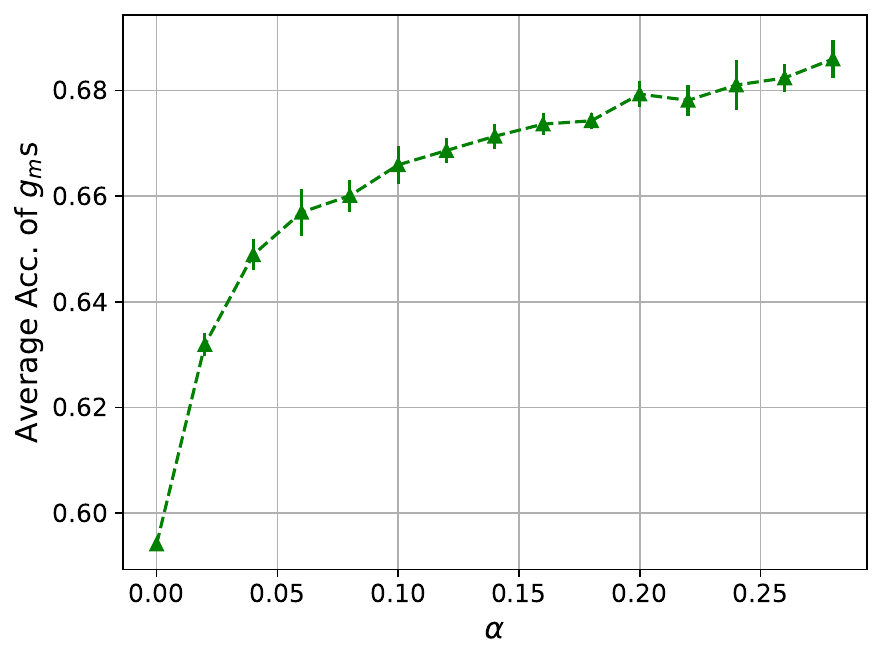}}
\subfigure[Estimation of $p(d', m)$s]{\includegraphics[width=0.32\textwidth]{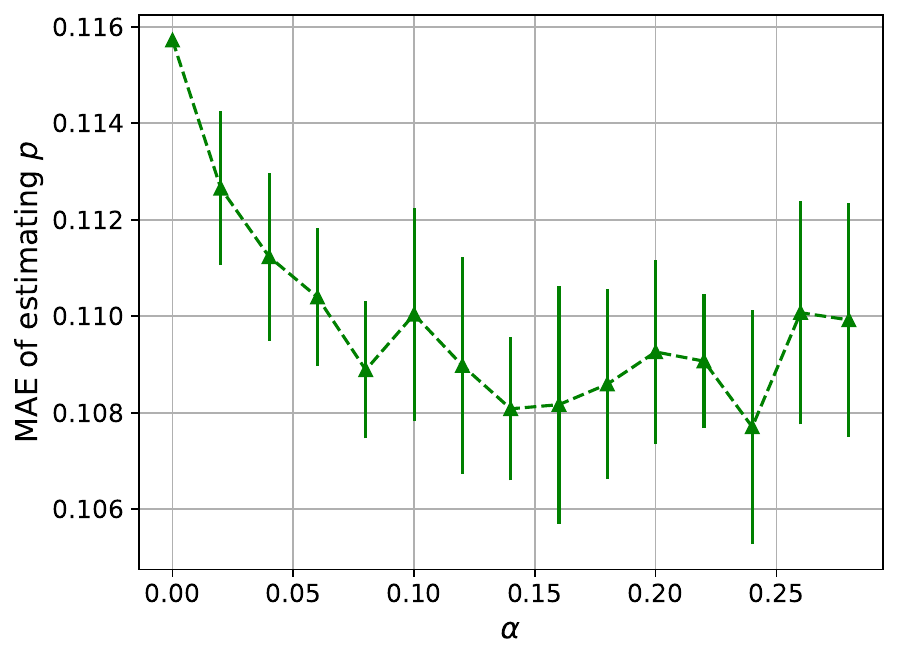}}
\vskip -0.1in
\caption{Additional results for Reducing the OOD gap experiment in Figure \ref{fig:helm-ood}.}
\label{fig:helm-ood-sup}
\end{center}
\vskip -0.2in
\end{figure*}

\subsection{Reducing the OOD gap} 
We present additional results for this experiment in Figure \ref{fig:helm-ood-sup}. (a) shows Pearson correlation improvement as we increase $\alpha$, similar to the trends in accuracy improvement in Figure \ref{fig:helm-ood}; (b) demonstrates that the accuracy of correctness predictors $g_m$s improves as we increase the number of samples from $d'$ used for training them, thus reducing the OOD gap; (c) shows the mean absolute error (MAE) of our kernel smoothing estimator of the accuracy of correctness predictors $p(d', m)$ -- the estimator does not improve as much with increased $\alpha$, thus $S_3$ eventually becomes worse than $S_1$ in terms of correlation and accuracy of the selected models.

\begin{figure*}
\begin{center}
\subfigure[$S_1$ \eqref{eq:score_prob}]{\includegraphics[width=0.32\textwidth]{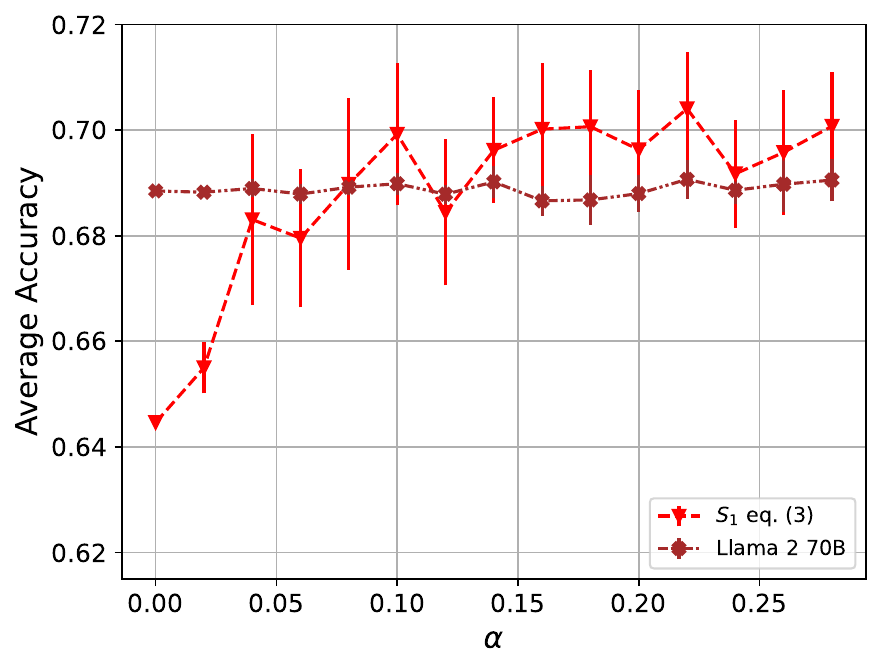}}
\subfigure[$S_2$ \eqref{eq:score_acc}]{\includegraphics[width=0.32\textwidth]{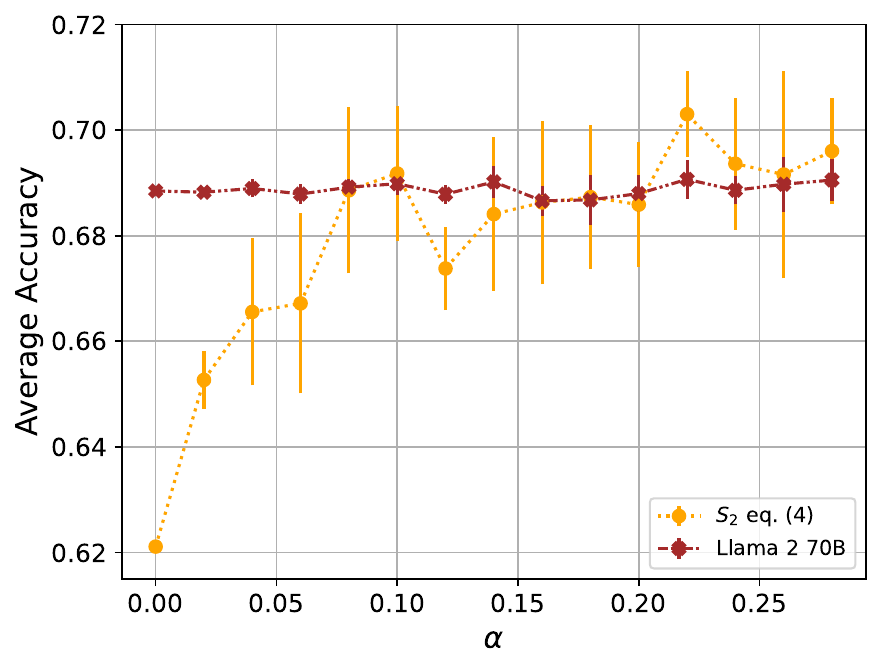}}
\subfigure[$S_3$ \eqref{eq:select_s3}]{\includegraphics[width=0.32\textwidth]{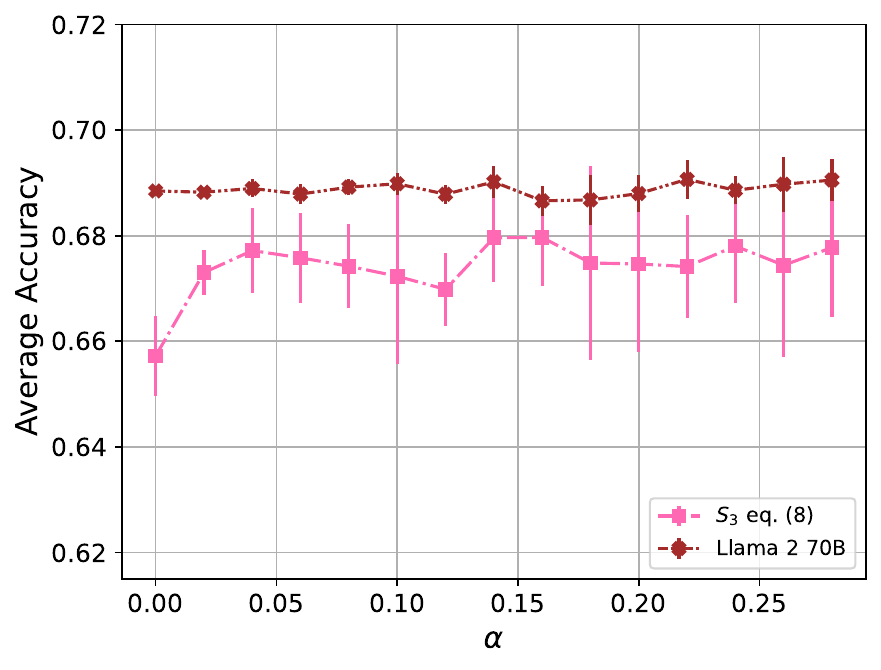}}
\vskip -0.1in
\caption{LLM routing with $\leq13$B parameter models compared to \texttt{Llama 2 70B}.}
\label{fig:helm-13b-individual}
\end{center}
\vskip -0.2in
\end{figure*}

\begin{figure*}
\begin{center}
\subfigure[$S_1$ \eqref{eq:score_prob}]{\includegraphics[width=0.32\textwidth]{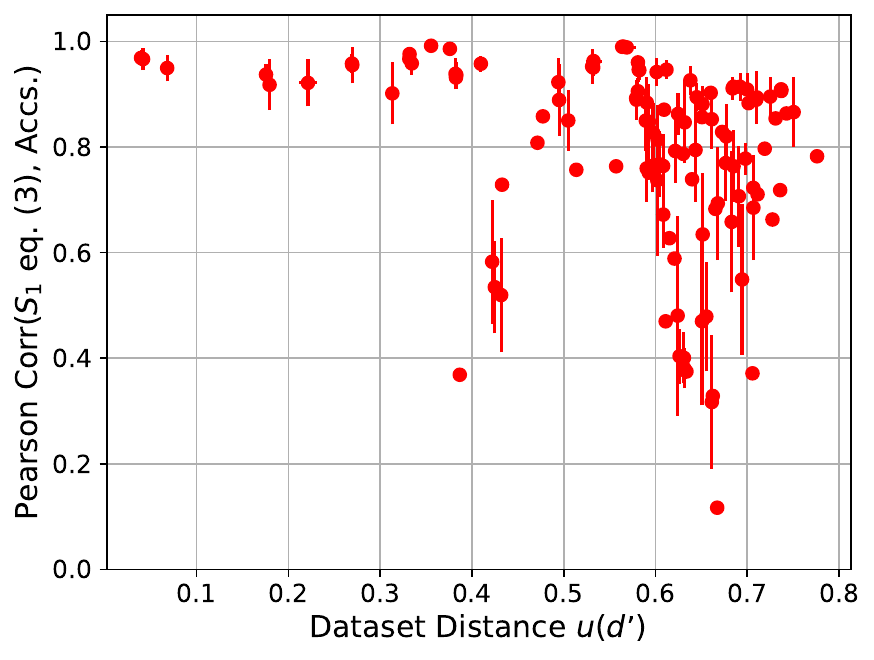}}
\subfigure[$S_2$ \eqref{eq:score_acc}]{\includegraphics[width=0.32\textwidth]{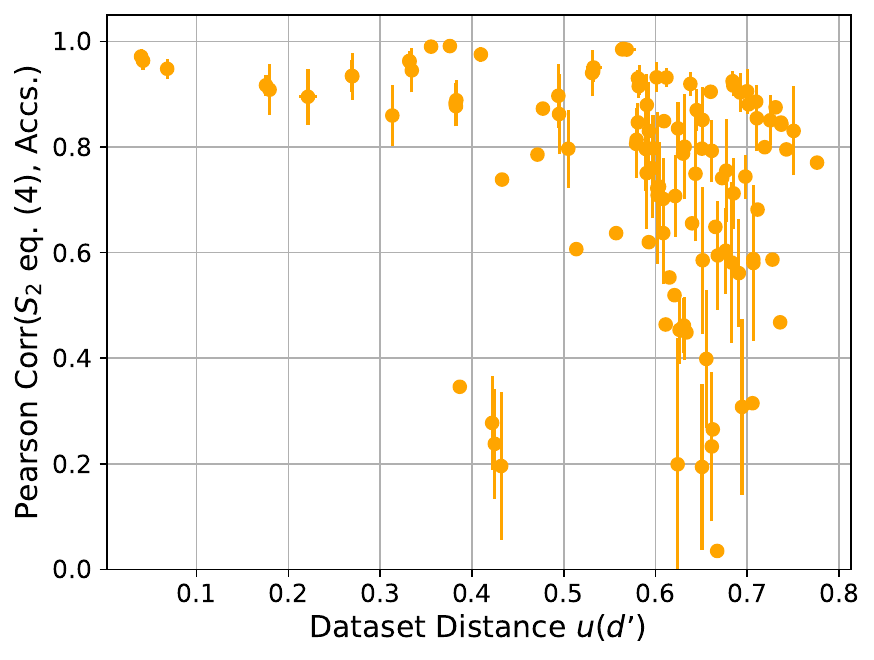}}
\subfigure[$S_3$ \eqref{eq:score_conf}; true $p(d',m)$]{\includegraphics[width=0.32\textwidth]{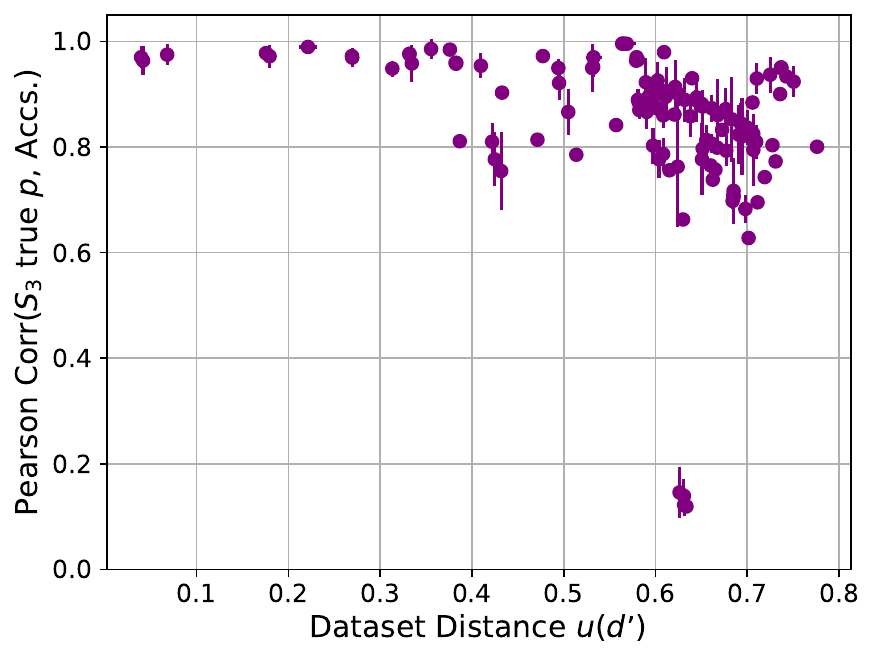}}
\vskip -0.1in
\caption{Correlation of scores and LLM accuracies on new tasks and corresponding data distances.}
\label{fig:distance-corr-sup}
\end{center}
\vskip -0.2in
\end{figure*}

\subsection{Dataset distance and Pearson correlation}
\label{sup:corr-distance}

The dataset distance $u(d')$ is computed as in \eqref{eq:data-distance}. As evident from \eqref{eq:point-distance}, dataset distance will usually decrease for larger values of $\alpha$ as inputs from $d'$ are moved into $\mathcal{D}$ (assuming that inputs from $d'$ are on average closer to each other than they are to inputs from other tasks). In this experiment, this serves as a mechanism to study the performance of LLM routing on closer datasets, providing insights into the benefits of learning LLM routers on \emph{more benchmarks} where it is more likely that dataset distance for a new task is small.

In Figure \ref{fig:distance-corr-sup} we present relations between dataset distance $u(d')$ and Pearson correlation between various model scores and accuracies of candidate LLMs. For results with $S_3$ see Figure \ref{fig:distance-corr}.

\section{Additional details for model routing on MixInstruct}
\label{sup:mix-instruct}
\paragraph{Correctness predictor and metrics} 
In the experiments on the MixInstruct dataset \citep{jiang2023llm}, we construct $S_1(m,d')$ following the scoring approach described in Appendix \ref{sup:data-distance}, where the MixInstruct train set was defined as the benchmark dataset. 
Then, instead of computing a per-dataset score for the entire test set, we compute the score for each test point, i.e., $S_1(m,x_i^{d'}) = g_m(x_i^{d'})$, and select a model $m$ per-point based on this score.
The reported metrics in Table \ref{tab:mixinstruct_per_point_results} and Figure \ref{fig:mixinstruct_per_dist_thresh} are averaged over the output evaluations of these per-point model selections. 
In our experiments, to compute $g_m(x_i^{d'})$ we use the BERTScore metric on the closest train set points (as $y(x,m)$ in \ref{eq:knn_classifier}). 
This was motivated by the conceptual relation between the implementation of our approach and the BERTScore, which relies on embedding space distances, and was validated empirically.

\paragraph{kNN parameter}
We set $k=10$ for the kNN classifier, slightly higher than in the HELM experiments. 
This choice was motivated by the in-distribution properties of the test set in MixInstruct, which is constructed from different parts of the same datasets that comprise the train set.
We note that the metrics did not significantly vary for different choices of $k\in [5,100]$.

\section{Proof of Lemma \ref{lem:adaptive-improvement}}
\label{sup:lemma}

\begin{lemma}[Lemma \ref{lem:adaptive-improvement}]
Let $\ell(y_1,y_2) = \rho(y_1 - y_2)$ for some subadditive $\rho:\mathbf{R}\to\mathbf{R}$ (e.g.\ $\rho(x) = \frac12x^2$ for the square loss). We have 
\[
\begin{aligned}
\ell(S_2,\widetilde{S}) \le \mathbf{E}\big[\ell(\bar{g}_m(X^d),y(X^d,m))\big]), \\
\ell(S_3,\widetilde{S}) \le \mathbf{E}\big[\ell(p(d,m)\bar{g}_m(X^d) + (1-p(d,m))(1-\bar{g}_m(X^d)),y(X^d,m))\big]).
\end{aligned}
\]
\end{lemma}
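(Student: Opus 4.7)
Both inequalities fall out of Jensen's inequality once $\widetilde{S}$, $S_2$, and $S_3$ are recognized as population expectations in $X^d$. I would begin by writing $\widetilde{S} = \mathbf{E}[y(X^d,m)]$ and $S_2 = \mathbf{E}[\bar{g}_m(X^d)]$, so that $S_2 - \widetilde{S} = \mathbf{E}\big[\bar{g}_m(X^d) - y(X^d,m)\big]$. Using $\ell(y_1,y_2) = \rho(y_1 - y_2)$ this gives
\[
\ell(S_2,\widetilde{S}) = \rho\big(\mathbf{E}[\bar{g}_m(X^d) - y(X^d,m)]\big) \le \mathbf{E}\big[\rho(\bar{g}_m(X^d) - y(X^d,m))\big] = \mathbf{E}\big[\ell(\bar{g}_m(X^d), y(X^d,m))\big],
\]
where the middle inequality is Jensen's, permissible under the stated structural hypothesis on $\rho$ (which is satisfied by the squared-loss example).

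For the second bound I would apply the same argument to the task-shrunken prediction $h_m(X^d) := p(d,m)\bar{g}_m(X^d) + (1 - p(d,m))(1 - \bar{g}_m(X^d))$. Linearity of expectation gives $\mathbf{E}[h_m(X^d)] = p(d,m)\,S_2 + (1 - p(d,m))(1 - S_2) = S_3$, hence $S_3 - \widetilde{S} = \mathbf{E}\big[h_m(X^d) - y(X^d,m)\big]$. A second invocation of Jensen's then yields
\[
\ell(S_3,\widetilde{S}) = \rho\big(\mathbf{E}[h_m(X^d) - y(X^d,m)]\big) \le \mathbf{E}\big[\ell(h_m(X^d), y(X^d,m))\big],
\]
which is exactly the second claim.

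The only real obstacle is pinning down what ``subadditive $\rho$'' is meant to guarantee: the literal reading $\rho(a+b) \le \rho(a) + \rho(b)$ does not include $\rho(x) = \tfrac{1}{2}x^2$, so I would interpret the hypothesis as ``$\rho$ obeys the averaged Jensen bound $\rho\big(\tfrac{1}{n}\sum_i z_i\big) \le \tfrac{1}{n}\sum_i \rho(z_i)$,'' which passes to expectations in the usual limiting way. This is implied by convexity together with $\rho(0)\le 0$ (covering the square loss), and also by $\rho$ being a seminorm (covering absolute-error losses). Under that reading the remainder of the argument is two lines of bookkeeping, so no further structural work is needed.
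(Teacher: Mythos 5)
Your proof is the same argument as the paper's: identify $\widetilde{S}$, $S_2$, and $S_3$ with expectations of $y$, $\bar g_m$, and $h_m := p(d,m)\bar g_m + (1-p(d,m))(1-\bar g_m)$ over $X^d$, move the expectation outside $\rho$, and invoke Jensen's inequality. No difference in strategy or decomposition.

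Where you add value is in flagging the mismatch between the stated hypothesis and the example. You are right that $\rho(x)=\tfrac12 x^2$ is not subadditive ($\rho(a+b)=\rho(a)+\rho(b)+ab$, which exceeds $\rho(a)+\rho(b)$ whenever $ab>0$), so the hypothesis as written excludes the paper's own motivating example. The paper's proof compounds this by justifying the Jensen step with the claim ``subadditive functions are convex,'' which is false in general — concave functions vanishing at the origin are subadditive, not convex; $\rho(x)=\sqrt{|x|}$ is a subadditive counterexample. What the argument actually uses, and all it needs, is convexity of $\rho$, and the statement should simply hypothesize that. Your proposed patch slightly overcorrects: the condition $\rho(0)\le 0$ is not required for $\rho(\mathbf{E}[Z])\le\mathbf{E}[\rho(Z)]$ — convexity alone suffices — though it would be relevant if one wanted a genuine subadditivity-style bound rather than the averaged Jensen bound. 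But the essential diagnosis, that ``subadditive'' should read ``convex,'' is correct and is something the paper's proof gets wrong.
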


\begin{proof}
We start by showing the upper bound of $\ell(S_2,\widetilde{S})$:
\[
\begin{aligned}
\ell(S_2,\widetilde{S}) &= \rho(\mathbf{E}\big[\bar{g}_m(X^d)\big] - \mathbf{E}\big[y(X^d,m)\big]) & \text{(def of $\ell$)} \\
&= \rho(\mathbf{E}\big[\bar{g}_m(X^d) - y(X^d,m)\big]) \\
&\le \mathbf{E}\big[\rho(\bar{g}_m(X^d) - y(X^d,m))\big]) & \text{(convexity of $\rho$)}\\
&= \mathbf{E}\big[\ell(\bar{g}_m(X^d),y(X^d,m))\big]), & \text{(def of $\ell$)}
\end{aligned}
\]
where we recalled that subadditive functions are convex in the third step. The upper bound of $\ell(S_3,\widetilde{S})$ follows a similar argument:
\[
\begin{aligned}
\ell(S_3,\widetilde{S}) &= \rho(p(d,m)\mathbf{E}\big[\bar{g}_m(X^d)\big] + (1-p(d,m))(1-\mathbf{E}\big[\bar{g}_m(X^d)\big]) - \mathbf{E}\big[y(X^d,m)\big]) \\
&= \rho(\mathbf{E}\big[p(d,m)\bar{g}_m(X^d) + (1-p(d,m))(1-\bar{g}_m(X^d)) - y(X^d,m)\big]) \\
&\le \mathbf{E}\big[\rho(p(d,m)\bar{g}_m(X^d) + (1-p(d,m))(1-\bar{g}_m(X^d)) - y(X^d,m))\big]) \\
&= \mathbf{E}\big[\ell(p(d,m)\bar{g}_m(X^d) + (1-p(d,m))(1-\bar{g}_m(X^d)),y(X^d,m))\big]) \\
&\le \mathbf{E}\big[\ell(\bar{g}_m(X^d),y(X^d,m))\big]).
\end{aligned}
\]
\end{proof}

\newpage
\begin{table}[t]
\caption{HELM dataset details}
\label{tab:helm_dataset_desc}
\centering
{\renewcommand{\arraystretch}{1.1}
\begin{tabular}{| c || c | c |} 
 \hline
 Dataset & Size (instances) & Type \\ [0.1ex] 
 \hline\hline
 RAFT-ADE Corpus V2 & 40 & Binary Classification \\ 
 \hline
 RAFT-Banking 77 & 40 & 77 Class Classification \\
 \hline
 RAFT-NeurIPS Impact Statement Risks & 40 & Binary Classification \\
 \hline
 RAFT-One Stop English & 40 & 3 Class Classification \\
 \hline
 RAFT-Overruling & 40 & Binary Classification \\
 \hline
 RAFT-Semiconductor Org Types & 40 & 3 Class Classification \\
 \hline
 RAFT-Systematic Review Inclusion & 40 & Binary Classification \\
 \hline
 RAFT-TAI Safety Research & 40 & Binary Classification \\
 \hline
 RAFT-Terms of Service & 40 & Binary Classification \\
 \hline
 RAFT-Tweet Eval Hate & 40 & Binary Classification \\
 \hline
 RAFT-Twitter Complaints & 40 & Binary Classification \\
 \hline
 IMDB & 1000 & Binary Classification \\
 \hline
 Civil Comments-demographic=all & 1000 & Binary Classification \\
 \hline
 bAbI-QA-task=all & 1000 & Q\&A: one word answers \\
 \hline
 BoolQ & 1000 & Binary Classification \\
 \hline
 Entity Matching-Dataset=Beer & 182 & Binary Classification \\
 \hline
 Entity Matching-Dataset=Dirty iTunes Amazon & 218 & Binary Classification \\
 \hline
 Entity Matching-Dataset=Abt Buy & 1000 & Binary Classification \\
 \hline
 Entity Data Imputation-Dataset=Restaurant & 242 & Q\&A: one word answers \\
 \hline
 Entity Data Imputation-Dataset=Buy & 182 & Q\&A: one word answers \\
 \hline
 BBQ-subject=all & 1000 & Multiple Choice Questions \\
 \hline
 Legal Support & 1000 & Multiple Choice Questions \\
 \hline
 LSAT QA-task=all & 461 & Multiple Choice Questions \\
 \hline
 MMLU-Subject=Abstract Algebra & 111 & Multiple Choice Questions \\
 \hline
 MMLU-Subject=College Chemistry & 108 & Multiple Choice Questions \\
 \hline
 MMLU-Subject=Computer Security & 111 & Multiple Choice Questions \\
 \hline
 MMLU-Subject=Econometrics & 126 & Multiple Choice Questions \\
 \hline
 MMLU-Subject=US foreign policy & 111 & Multiple Choice Questions \\
 \hline
 Truthful QA-task=mc single & 654 & Multiple Choice Questions \\
 \hline\hline
 Total: $29$ datasets & 9946 & \\
 \hline
\end{tabular}
}
\end{table}

\begin{table}[t]
\caption{Candidate LLMs}
\label{tab:models}
\centering
{\renewcommand{\arraystretch}{1.1}
\begin{tabular}{| c || c | c |} 
 \hline
Name & Model Size, B & Average Accuracy on the 29 HELM tasks \\ [0.1ex]
\hline\hline
codegen-16b-mono & 16 & 0.451 \\
\hline
dial-flant5-xl & 3 & 0.454 \\
\hline
falcon-40b & 40 & 0.641 \\
\hline
flan-t5-xl & 3 & 0.650 \\
\hline
flan-t5-xxl & 11 & 0.658 \\
\hline
flan-ul2 & 20 & 0.668 \\
\hline
gpt-jt-6b-v1 & 6 & 0.576 \\
\hline
gpt-neox-20b & 20 & 0.492 \\
\hline
mpt-7b-instruct & 7 & 0.514 \\
\hline
mt0-xxl & 13 & 0.543 \\
\hline
llama-2-13b & 13 & 0.624 \\
\hline
llama-2-13b-chat & 13 & 0.623 \\
\hline
llama-2-13b-chat-beam & 13 & 0.603 \\
\hline
llama-2-70b & 70 & \textbf{0.688} \\
\hline
llama-2-70b-chat & 70 & \underline{0.687} \\
\hline
llama-2-7b & 7 & 0.610 \\
\hline
llama-2-7b-chat & 7 & 0.605 \\
\hline
starcoder & 15 & 0.587 \\
 \hline\hline
 Total: $18$ LLMs & 347 & \\
 \hline
\end{tabular}
}
\end{table}

\end{document}